\newcounter{ALC@tempcntr}
\newcommand{\NEWCOMMENT}[1]{%
    \setcounter{ALC@tempcntr}{\arabic{ALC@rem}}
    \setcounter{ALC@rem}{1}
    \item {\footnotesize \it  //  #1}  
    \setcounter{ALC@rem}{\arabic{ALC@tempcntr}}
}%
\newcommand{\NEWSECTION}[1]{%
    \setcounter{ALC@tempcntr}{\arabic{ALC@rem}}
    \setcounter{ALC@rem}{1}
    \item {  \bf  #1 }  
    \setcounter{ALC@rem}{\arabic{ALC@tempcntr}}
}%
\newcommand\inner[2]{\langle #1, #2 \rangle}
\DeclareMathOperator{\bw}{\mathbf{w}}
\newcommand*{\rom}[1]{\expandafter\@slowromancap\romannumeral #1@}
\newtheorem{theorem}{Theorem}
\newtheorem*{proof*}{proof}
\newtheorem{lemma}{Lemma}
\newtheorem{assumption}{Assumption}
\begin{document}

\title{Sustainable Federated Learning}

\author{\vspace{0.2in}\onehalfspacing\IEEEauthorblockN{Ba\c{s}ak~G{\"u}ler\IEEEauthorrefmark{1} \quad  Aylin Yener\IEEEauthorrefmark{7} 
} \vspace{0.1cm}\\
\IEEEauthorblockA{\IEEEauthorrefmark{1}University of California, Riverside\\
Department of Electrical and Computer Engineering\\
Riverside, CA 92521 \\
{\em bguler@ece.ucr.edu}}
\vspace{0.5cm}\\
\IEEEauthorblockA{\IEEEauthorrefmark{7}The Ohio State University\\ Department of Electrical and Computer Engineering\\
Columbus, OH 43210 \\
{\em yener@ece.osu.edu}} 
}


\maketitle
\begin{abstract}
Potential environmental impact of machine learning by large-scale wireless networks is a major challenge for the sustainability of future smart ecosystems. In this paper, we introduce sustainable machine learning in federated learning settings, using rechargeable devices that can collect energy from the ambient environment. We propose a practical federated learning framework that leverages intermittent energy arrivals for training, with provable convergence guarantees. Our framework can be applied to a wide range of machine learning settings in networked environments, including distributed and federated learning in wireless and edge networks. Our experiments demonstrate that the proposed framework can provide significant performance improvement over the benchmark energy-agnostic federated learning settings.
\end{abstract}

\begin{IEEEkeywords}
Sustainable machine learning, federated learning, green AI. 
\end{IEEEkeywords}


\section{Introduction}
The environmental impact of machine learning matters. Modern machine learning systems consume massive amounts of energy.  
In fact, the computational resources needed to train a state-of-the-art deep learning model has increased by 300000x between 2012-2018 \cite{schwartz2020green}. Today, it is estimated that training a single deep learning model can generate as much $CO_2$ as the total lifetime of five cars \cite{strubell2019energy}. 
This impact will worsen with the emergence of machine learning in distributed and federated learning settings, where billions of devices are expected to train machine learning models on a regular basis. 
In this paper, we provide a first study for sustainable machine learning in the federated learning setting, through the use of compute devices that can generate energy from renewable sources in the ambient environment, such as solar, kinetic, ambient light, or ambient RF energy \cite{radousky2012energy, li2018enabling, chang2018insight}.

Federated learning is a communication-efficient and privacy-preserving distributed learning framework for training machine learning models over large volumes of data created and stored locally at millions of remote clients, such as the data generated at mobile or edge devices \cite{pmlr-v54-mcmahan17a, mcmahan2017communication, bonawitz2019towards, kairouz2019advances, li2020federated}. 
This is an iterative training process that is coordinated by a central server. The server maintains a global model and sends its current state to the clients at the beginning of each training iteration.  
During training, the data collected by the individual devices never leaves the device, instead, devices locally update the global model using their local dataset, creating a local model. The local models are then sent  to the central server, who then aggregates the local models to update the global model. 
It has received significant attention in the recent years and has found a variety of applications from keyboard query recommendations to healthcare, from electrical load forecasting to traffic flow prediction \cite{yang2018applied, konevcny2016federated, bonawitz2019federated, li2019fair, bhagoji2018analyzing, geyer2017differentially, taik2020electrical, saputra2019energy, liu2020privacy, xu2020federated, yang2019federated, chen2020fedhealth, yu2020learning}. 

Recent works have considered energy efficient training strategies for federated learning  \cite{zeng2020energy, wang2019adaptive}. In these works, the primary focus is on either minimizing the total energy cost of training \cite{zeng2020energy}, or minimizing the training loss within a given total energy budget \cite{wang2019adaptive}, where all of the energy is available at the beginning of training.  
In contrast, our focus is on federated learning when devices generate energy through an intermittent and non-homogeneous renewal process. Our goal is to build a scalable and practical federated learning framework with provable convergence guarantees, for networks with intermittent energy arrivals.  

Prior work has investigated client selection in the context of  federated learning primarily for improving the convergence rate or reducing the communication overhead of training \cite{LiHYWZ20, DBLP:conf/nips/AgarwalSYKM18, chen2020optimal, goetz2019active, cho2020client}.
Such algorithms are designed with the underlying assumption that all clients are available to participate in training if selected. Then, the goal is to either sample a small number of clients uniformly at random to minimize the communication overhead per iteration, or to select the clients that maximize the convergence rate of training. 
In contrast, in our setting, whether or not a client can participate in training is determined by an underlying energy arrival process, which is intermittent and non-homogeneous across the clients. 
Several works have considered federated learning when clients may dropout from the system during training, however, in these setups, the main assumption is that the client dropouts occur uniformly at random, which does not bias the training process  \cite{mcmahan2017communication, bonawitz2017practical}.

In this work, we consider a federated learning scenario with $N$ clients and a server. Client $i$ holds a local dataset $\mathcal{D}_i$. Clients wish to jointly train a machine learning model over the datasets $\mathcal{D}_1, \ldots, \mathcal{D}_N$. Training is   coordinated by the central server, who maintains a global model that is updated locally by the clients through an iterative process. 
More specially, at each training iteration, the server sends the current state of the global model to the clients. Then, clients locally update the global model, through multiple stochastic gradient descent (SGD) iterations over their local dataset, and send their local updates to the server. 
Finally, the server updates the global model by aggregating the local updates received from the clients, and sends the updated global model back to the clients, to be used in the next iteration. 

Unlike the conventional federated learning setting, we assume that clients only have intermittent energy availability, and can participate in the training process only when they have energy available. The energy generation process is not uniform across the devices, that is, some clients may have more frequent energy arrivals than others. One potential approach in this setting is to let each client to participate in training as soon as they generate enough energy to do so. However, as we demonstrate in our experiments, in this setting, conventional federated learning strategies may bias the global model towards clients with more frequent energy arrivals, causing a performance loss in the resulting accuracy. Another approach is to wait until all clients generate enough energy to participate in training before each iteration of the conventional federated learning scheme. 
Doing so, however, would require waiting for the clients with the slowest energy generation, therefore, even though the training is unbiased, the convergence rate can be very slow to reach a desired performance level.  

We propose a simple federated learning framework with provable convergence guarantees, for networks in which devices generate energy through an intermittent renewable energy source. The proposed framework consists of three main components, client scheduling, local training at the clients, and model update at the server. Client scheduling is performed at the client level, in other words, each client decides whether or not to participate at any given training iteration based solely on the local estimation of the energy arrival process. The client scheduling process requires no coordination between the clients, and is scalable to large networks. 
During the local training phase, clients who choose to participate at the current training iteration update the global model using their local datasets, and then send their local updates to the server. Upon receiving the local updates, the server updates the global model for the next iteration. 

In our experiments, we compare the performance of the proposed framework with benchmark federated learning settings that are agnostic to the energy-arrival process of the clients. The first benchmark is the federated learning setting in which clients participate in training as soon as they generate enough energy, and then wait for the next energy arrival. The second benchmark is the setting in which the server waits for all clients to have enough energy to participate in training before initiating a single training iteration. 
We show that the proposed framework can significantly outperform both benchmarks in terms of the test accuracy. 

This paper is a first study of sustainable federated learning, and we hope our work to open up new research directions in building sustainable federated and distributed learning schemes for large-scale networks, where millions of devices jointly train machine learning models over large volumes of data. Some of these research directions include, formalizing the fundamental performance limits of distributed training under stochastic and unknown energy arrival processes, model quantization and compression techniques that can adapt to the resource and energy arrival patterns, and  characterizing the relationship between the energy renewal processes and training performance.  

\section{Problem Formulation}

\subsection{Federated Learning Setup}\label{federated}
We consider a distributed training setting with $N$ clients. Client $i$ has a local dataset $\mathcal{D}_i$ with $D_i$ data points. The total number of data points across all clients is $D = \sum_{i\in[N]} D_i$. 
The clients are connected through a central server that coordinates the training. 

The goal is to train a model $\mathbf{w}$ to minimize a global loss function
\begin{equation}
F(\mathbf{w}) = \frac{1}{D} \sum_{i=1}^{N} \sum_{j=1}^{D_i} l(\mathbf{w}, \mathbf{x}_{ij}) \label{eq:global1}
\end{equation} 
where $l(\mathbf{w},  \mathbf{x}_{ij})$ represents the loss of a single data point $\mathbf{x}_{ij}$ in the local dataset $\mathcal{D}_i$ of client $i$. 

By defining a local loss function
\begin{equation}\label{localoss}
F_i(\mathbf{w}) = \frac{1}{D_i} \sum_{j=1}^{D_i} l(\mathbf{w}, \mathbf{x}_{ij})
\end{equation}
for client $i$, the global loss function in \eqref{eq:global1} can be written as
\begin{equation}
F(\mathbf{w}) = \sum_{i=1}^{N} p_i F_i(\mathbf{w})
\end{equation}
where $p_i := \frac{D_i}{D}$ and thus,
\begin{equation}\label{eq:sump}
\sum_{i\in[N]} p_i = 1
\end{equation}

We next provide the details of training in the conventional federated learning setting \cite{mcmahan2017communication}. In this setting, the server maintains a global model that is updated locally by the clients. The local updates are then aggregated at the server to update the global model.  As such, the training process consists of local and global update iterations.

Each iteration (local or global) is represented by a discrete time instant $t\in\{0, 1, 2, \ldots\}$. It is assumed that a global update occurs at every $T$ time instants, where $T$ is the number of local training iterations that take place between two global updates. Without loss of generality, we assume that a global update occurs when $t\mod T = 0$, and let 
\begin{equation}
\mathcal{T} = \{t: t\mod T  = 0\}
\end{equation}
denote the set of time instances at which a global update occurs, which we also refer to as {\it synchronization steps}. 


At the beginning of each global round $t\in\mathcal{T}$, the server sends the current state of the global model to the clients, which is denoted by a vector  $\mathbf{w}^{(t)}\in \mathbb{R}^d$ of dimension $d$, where $d$ is the model size. 
Then, all or a fraction of the clients update the global model $\mathbf{w}^{(t)}$ through $T$ local training iterations, using their local datasets.  
The set of clients that participate in training at a given iteration depends on the specifics of the client scheduling algorithm, which could range from all clients to a small fraction of clients. We let $\mathcal{S}_t$ denote the set of participating clients at iteration $t$. 

Local training at the clients is performed through stochastic gradient descent (SGD), in which the model parameters are updated iteratively in the negative direction of the gradient evaluated over a random sample (or a minibatch) from the local dataset. 

To present the details of the local training process, we consider a synchronization step $t\in\mathcal{T}$, at which the server sends the current estimation of the global model $\bw^{(t)}$ to the clients. We also let $\mathbf{w}_i^{(t)}$ denote the {\it local} estimation of the model parameters at client $i$ at time $t$. Accordingly,  we will call $\mathbf{w}_i^{(t)}$ the {\it local model} of client $i$ at time $t$. 

When $t\in\mathcal{T}$, client $i\in[N]$ sets its local model as,
\begin{equation}
\mathbf{w}_i^{(t)}\leftarrow \mathbf{w}^{(t)}. 
\end{equation}
In other words, at each synchronization step, clients synchronize their local models with the current state of the global model.
Then, client $i\in\mathcal{S}_t$ updates their local model through $T$ SGD iterations, 
\begin{equation}\label{localupdate}
\mathbf{w}_i^{(t+j+1)} = \mathbf{w}_i^{(t+j)} - \eta_{t+j}  \nabla F_i(\mathbf{w}_i^{(t+j)}, \xi_{i}^{(t+j)})
\end{equation}
for $j\in\{0, \ldots, T-1\}$, where $\mathbf{w}_i^{(t+0)} = \mathbf{w}^{(t)}$, $\eta_{t+j}$ is the learning rate (step size), and $\nabla F_i(\mathbf{w}_i^{(t+j)}, \xi_{i}^{(t+j)})$ denotes the stochastic gradient of client $i$, 
with $\xi_{i}^{(t)}$ representing a uniformly random sample (or a minibatch) from $\mathcal{D}_i$. The stochastic gradient is an unbiased estimator of the true gradient of client $i$, 
\begin{equation}
\mathbb{E} [\nabla F_i(\mathbf{w}_i^{(t+j)}, \xi_{i}^{(t+j)})] 
= \nabla  F_i(\mathbf{w}_i^{(t+j)}) 
\end{equation}
where $\nabla  F_i(\mathbf{w}^{(t+j)})$ is the true gradient of client $i$, i.e., the gradient of the local loss function \eqref{localoss} evaluated at $\mathbf{w}_i^{(t+j)}$. 

At the end of $T$ local SGD operations, clients $i\in \mathcal{S}_{t+T-1}$ send their local updates  $\mathbf{w}_i^{(t+T)}$ from \eqref{localupdate} to the server. For all clients not participating in the current global round, i.e., for clients $i\notin\mathcal{S}_{t+T-1}$, it is assumed that   $\mathbf{w}_i^{(t+T)} = \mathbf{w}^{(t)}$.

Finally, the server updates the global model, 
\begin{equation}\label{eq:aggregation}
\mathbf{w}^{(t+T)} = \sum_{i\in[N]} p_i \mathbf{w}_i^{(t+T)}. 
\end{equation} 
by aggregating the local models of the clients. After updating the global model, the server sends the updated global model $\mathbf{w}^{(t+T)}$ to the clients for the next iteration. 


We use the term \emph{global round} to refer to the block of $T$ time instances between two consecutive syncronization steps (global updates). In other words, global round $t$ corresponds to the block of time instances $t\in\{t, \ldots, t+T-1\}$. 

We also note that our focus is on the conventional synchronous federated learning setup, in which all clients participating at a given global round perform the same number of local training iterations and the global model is updated only at specified time instances  $t\in\mathcal{T}$. Asynchronous learning scenarios in which clients can perform varying number of local iterations and communicate their local models with the server at arbitrary time instances are interesting future directions, but are beyond our scope.


\subsection{Energy Profile of the Clients}\label{sec:energy}

In this work, we consider devices that are powered by the small quantities of energy generated from the ambient environment, such as solar, kinetic, ambient light or RF energy \cite{ tan2010energy, parks2013wireless, kim2014ambient, gorlatova2015movers}. 
Clients can participate in training  only if they have available energy to do so. 


It is assumed that it takes $E_i$ global rounds for device $i$ to generate enough energy to participate in one global round of training, which includes the energy cost of computing the $T$ local updates from \eqref{localupdate} and communicating it with the server. 
We call $E_i$ the energy renewal cycle of client $i$. 

As we demonstrate in our experiments, in this setting, i.e., when clients have intermittent energy arrivals, the conventional federated learning setup from Section~\ref{federated} might bias the model towards clients with more frequent energy availability. This calls for an energy-aware client scheduling and training strategy which we study in this paper.

{\bf Main Problem.} Given the above energy arrival and training setup, the main problem we study in this work is, {\it ``How to design a scalable federated learning framework for devices with intermittent energy availability?''}. In the sequel, we provide a practical federated learning framework that takes into account the energy limitations of the clients during training, while ensuring theoretical convergence guarantees.

\section{Federated Learning  with Intermittent Resource Arrivals} 


We now introduce a practical federated learning framework for networks in which devices have intermittent energy availability. The overall procedure of our framework is provided in Algorithm~\ref{Alg}. Our framework consists of three main components, client scheduling, local training at the clients, and global model update at the server. 

\begin{algorithm}[!t]
  \caption{Federated Learning with Intermittent Resource Arrivals}\label{Alg} 
  \begin{algorithmic}[1]
    \INPUT{Number of devices $N$, local dataset $\mathcal{D}_i$ and energy renewal cycle $E_i$ of device $i\in[N]$, number of local training iterations $T$ at each global round, total number of training iterations $K$ where  $\frac{K}{T \times E_i} \in \mathbb{Z}^+$ for $i\in[N]$. 
    } \
    \OUTPUT{Global model  $\bw^{(K)}$.} \ 
    \vspace{0.2cm}
    \NEWSECTION{Initialization:}
    \vspace{0.1cm}
    \FOR{client $i=1,\ldots,N$}
    \STATE Initialize $I_i^t \leftarrow  0$ for $t\in[K]$ and $i\in[N]$. \NEWCOMMENT{Indicates whether client $i$ participates at iteration $t$.}
    \ENDFOR
    
     \vspace{0.2cm}
    \NEWSECTION{Training:}
    \vspace{0.1cm}

    \FOR{iteration $t=1,\ldots,K$}
    \vspace{0.1cm}

     \vspace{0.1cm}
    \NEWSECTION{Clients:}
     \vspace{0.1cm}
     
    \FOR{client $i=1,\ldots,N$}
    \IF{$ t\mod T E_i = 0$} 
    \NEWCOMMENT{Client $i$ has enough energy to participate in training.}
    \STATE Sample an integer $J$ uniformly random from $\{0, \ldots, E_i-1\}$.    
    \STATE Update $I_i^{t+JT+l} \leftarrow 1$ for $l\in\{0, \ldots, T-1\}$. 
    \NEWCOMMENT{Client $i$ is scheduled at global round $t+JT$.}
    \ENDIF

    \vspace{0.1cm}
    \IF{$t \mod T = 0$} 
    \IF{$I_i^t = 1$}
    \NEWCOMMENT{Client $i$ locally updates the model.}
    \STATE Initialize the local model $\mathbf{w}_i^{(t)} \leftarrow \mathbf{w}^{(t)}$
     \FOR{iteration $j=0,\ldots,T-1$}
    \STATE Update the local model according to \eqref{localupdate}. 
     \ENDFOR
    \STATE Send the local update $\mathbf{g}_i^{(t)} $ from \eqref{localupdateg}   to the server. 
    \ENDIF
    \ENDIF
     \ENDFOR
  
\vspace{0.1cm}
    \NEWSECTION{Server:}
    \vspace{0.1cm}
    
    \IF{$ (t+1)\mod T = 0$} 
    \STATE Receive the local updates $\mathbf{g}_i^{(t+1)}$ from the clients in  $\mathcal{S}_t = \{i: I_i^t = 1\}$. 
    \STATE Update the global model according to  \eqref{globalmodelupdate},
    \begin{equation}
\mathbf{w}^{(t+1)} = \mathbf{w}^{(t-T+1)} +  \sum_{i\in S_t} p_i \mathbf{g}_i^{(t+1)} 
\end{equation}
    \STATE  Send the updated global model $\mathbf{w}^{(t+1)}$ to the clients. 
     \ENDIF

    \ENDFOR

  \end{algorithmic}
\end{algorithm}

\subsection{Client scheduling} \label{scheduling}
The first component of our framework is client scheduling for training. Client selection in conventional federated learning algorithms are primarily based on the assumption that all clients are inherently available to participate in training if chosen, or that client dropouts occur uniformly at random (which does not bias the training), and focus on selecting the clients  to maximize the  convergence rate or to reduce the communication overhead of training \cite{mcmahan2017communication, bonawitz2017practical, LiHYWZ20, chen2020optimal, cho2020client}. In contrast, in our setup, not all clients can participate in the training process at all rounds. In particular, if a client has participated in one round, they may not have enough energy to participate in the next round. Moreover, the energy availability of the clients is non-uniform, i.e., some clients have less frequent energy arrivals than others. 

A naive approach for client scheduling is to schedule clients as soon as they have collected enough energy  to participate in training. 
However, as we will demonstrate in our experiments, doing so can bias the trained model towards clients with better (more frequent) energy availability.   
Another approach is to wait until all clients become available for training, and then use a conventional client sampling algorithm. However, waiting for all clients to have enough energy can significantly increase the total training time needed to achieve a target performance level. 

Instead, we propose a simple client scheduling protocol that can be performed locally by the clients. In our protocol, clients participate in training through a stochastic process based on their energy profile. The details of this process is as follows. 
First, we note that it takes $E_i$ global rounds for client $i$ to harvest enough energy to participate in one global round of training\footnote{For simplicity, we assume that  when $t=0$, all clients have enough energy to participate in one global round. Our results hold even if clients start at different time instances.}. When $t\mod (E_i T) = 0$, client $i$ samples an integer $J$ uniformly at random from the set $\{0, \ldots, E_i-1\}$. Then, within the $E_i$ global rounds starting at the time instances $\{t, t+T, \ldots, t+(E_i-1)T\}$, client $i$ only participates during the global round that starts at $t+JT$, and does not participate in the remaining global rounds.

We note that the client selection algorithm decides whether or not a client will participate at a given global round. 
If a client chooses to participate at a global round starting at some $t\in\mathcal{T}$, then the client participates for the whole duration of that global round, i.e., for $\{t, \ldots, t+T-1\}$, by computing the local model as in  \eqref{localupdate}.
As such, for any global round starting at $t\in\mathcal{T}$, the set of participating clients $\mathcal{S}_t$ at time $t, \ldots, t+T-1$ satisfy,
\begin{equation}
\mathcal{S}_t = \mathcal{S}_{t+1} = \ldots =  \mathcal{S}_{t+T-1},
\end{equation}
i.e., the set of clients participating in a given global round stays the same throughout the duration of that global round.


As we demonstrate in our theoretical analysis, the proposed client scheduling strategy provides provable convergence guarantees for the global model. 


\subsection{Local training}\label{sec:localtrain}


At the beginning of each global round, the server sends the current state of the global model to the clients. 
The clients then locally update the global model on their local datasets. 

The details of this local update process is as follows. 
Consider a global round starting at some $t\in\mathcal{T}$. Then, the server sends the current state of the global model $\mathbf{w}^{(t)}$ to the clients. 
Then, clients choose whether or not to participate in the current global round, based on the client scheduling process from Section~\ref{scheduling}. Clients who choose to participate in the current global round then compute a local model, by updating the global model $\mathbf{w}^{(t)}$ through $T$ local SGD iterations as in \eqref{localupdate}. 

After $T$ local SGD iterations, client $i\in\mathcal{S}_{t+T-1}$ sends a  local update to the server. The local update is defined as,
\begin{equation}\label{localupdateg}
\mathbf{g}_i^{(t+T)} \triangleq E_i (\mathbf{w}_i^{(t+T)} - \mathbf{w}^{(t)})
\end{equation}
which is obtained by shifting $\mathbf{w}_i^{(t+E)}$ by $\mathbf{w}^{(t)}$ and then scaling it  with respect to the energy renewal cycle $E_i$.  



\subsection{Global model update}
After receiving the local updates in \eqref{localupdateg} from the participating clients, the server updates the global model as,
\begin{equation}\label{globalmodelupdate}
\mathbf{w}^{(t+T)} = \mathbf{w}^{(t)} +  \sum_{i\in S_{t+T-1}} p_i \mathbf{g}_i^{(t+T)} 
\end{equation}
and sends the updated model $\mathbf{w}^{(t+T)}$ back to the clients, for the next iteration. 

We note that the complexity of Algorithm~\ref{Alg} is the same as that of  conventional federated learning, i.e., the federated averaging algorithm (\emph{FedAvg}) from \cite{mcmahan2017communication}. 
In the following, we demonstrate the theoretical convergence guarantees of our framework.

\section{Convergence Analysis} 

In this section, we provide the convergence guarantees of our framework. First, we review a few common technical assumptions  \cite{LiHYWZ20, stich2018local} that will be useful in our further analysis. 




\begin{assumption}(Strong-convexity) The local loss functions $F_i(\mathbf{w})$ for $i\in[N]$ are $\mu$-strongly convex:
\begin{equation}\label{eq:strong}
F_i(\mathbf{v}) \geq  F_i(\mathbf{w}) + F_i(\mathbf{w})^{T}(\mathbf{v} - \mathbf{w})  + \frac{\mu}{2} ||\mathbf{w} - \mathbf{v}||^2
\end{equation}
\end{assumption}

\begin{assumption}(Smoothness) The local loss functions $F_i(\mathbf{w})$ for $i\in[N]$ are $L$-smooth:
\begin{equation}\label{eq:smooth}
F_i(\mathbf{v}) \leq  F_i(\mathbf{w}) + F_i(\mathbf{w})^{T}(\mathbf{v} - \mathbf{w})  + \frac{L}{2} ||\mathbf{w} - \mathbf{v}||^2
\end{equation}
\end{assumption}

\begin{assumption}(Variance bound) The stochastic gradient $\nabla F_i(\mathbf{w}_i^{(t+j)}, \xi_{i}^{(t+j)})$ has bounded variance for all $i\in [N]$
\begin{equation}\label{eq:bounded-var}
E[||\nabla F_i(\mathbf{w}_i^{(t+j)}, \xi_{i}^{(t+j)}) -  \nabla F_i(\mathbf{w}^{(t)})||^2]\leq \sigma^2. 
\end{equation}
\end{assumption}

\begin{assumption}(Bounded second moment) The stochastic gradient $\nabla F_i(\mathbf{w}_i^{(t+j)}, \xi_{i}^{(t+j)})$ has bounded expected squared norm for all $i\in [N]$,
\begin{equation}\label{eq:bounded-norm}
E[||\nabla F_i(\mathbf{w}_i^{(t+j)}, \xi_{i}^{(t+j)})||^2]\leq G^2. 
\end{equation}
\end{assumption}


Our convergence analysis is simple and follows along the lines of standard convergence analysis techniques for distributed SGD with local averaging \cite{LiHYWZ20, stich2018local, cho2020client}. 

We first represent the model update process in an equivalent but more tractable form. Note that in our original problem formulation, at any global round, only the clients that choose to participate in that global round perform the local update. To make the mathematical analysis simpler, one can instead assume that all clients perform local training at each global round, but the global model is updated by using only the local updates from the clients that were originally scheduled at that global round. 
Note that, mathematically, the two processes lead to the same global model. 
Hence, we will use the latter approach in the following, and represent the model update process from Algorithm~\ref{Alg} as,
\begin{align}
&\mathbf{v}_k^{(t+1)}  = \mathbf{w}_k^{(t)} -  \eta_t \nabla F_k (\mathbf{w}_k^{(t)}, \xi_k^{(t)}) \label{vvkt}\\
&\mathbf{w}_k^{(t+1)}  = 
  \left \{\begin{matrix} \mathbf{v}_k^{(t+1)} &  \!t\!+\!1\!\notin  \mathcal{T} \\ \mathbf{w}_k^{(t+1-T)}  \!+\! \sum_{i\in\mathcal{S}_{t}} p_i E_i (\mathbf{v}_i^{(t+1)} \!-\! \mathbf{w}_i^{(t+1-T)}) &  \!t\!+\!1\!\in \mathcal{T}\end{matrix} \right . \label{vvkt2}
\end{align}
for $t\in[K]$ and $k\in[N]$. 
Note that for $t+1\in\mathcal{T}$, 
\begin{equation}\label{ekt1T}
\mathbf{w}_k^{(t+1-T)} = \mathbf{w}^{(t+1-T)}
\end{equation}
for all $k\in[N]$, since $t+1-T\in \mathcal{T}$ whenever $t+1\in\mathcal{T}$.

We next define two virtual sequences that will be useful in our analysis:
\begin{align}\label{eq:virtualw}
\overline{\mathbf{w}}^{(t+1)} = \sum_{k\in[N]} p_k \mathbf{w}^{(t+1)}_k 
\end{align}
and
\begin{align}\label{eq:virtualv}
\overline{\mathbf{v}}^{(t+1)} = \sum_{k\in[N]} p_k \mathbf{v}^{(t+1)}_k.  
\end{align}

We next provide a key lemma.  





\begin{lemma}\label{lemma1} (Unbiased client scheduling)
For all $t\in \mathcal{T}$, 
\begin{equation}\label{eq:unbiased}
\mathbb{E}\left[\overline{\bw}^{(t+1)} \right ]  = \overline{\mathbf{v}}^{(t+1)},
\end{equation}
hence, the client scheduling process from Section~\ref{scheduling} is unbiased.
\end{lemma}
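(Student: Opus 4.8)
The plan is to expand the aggregation branch of \eqref{vvkt2} directly and then take expectation over the client-scheduling randomness, exploiting that the scaling factor $E_i$ built into the local update \eqref{localupdateg} exactly cancels the probability $1/E_i$ that client $i$ is scheduled in a given global round. I would focus on the substantive case in which a global update actually takes place at step $t+1$ (i.e.\ $t+1\in\mathcal{T}$); in the complementary case \eqref{vvkt2} gives $\mathbf{w}_k^{(t+1)}=\mathbf{v}_k^{(t+1)}$ for every $k$, so averaging with the weights $p_k$ makes \eqref{eq:unbiased} immediate.

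First I would multiply \eqref{vvkt2} by $p_k$ and sum over $k\in[N]$. Since the correction term $\sum_{i\in\mathcal{S}_t}p_iE_i(\mathbf{v}_i^{(t+1)}-\mathbf{w}_i^{(t+1-T)})$ is independent of $k$ and $\sum_k p_k=1$ by \eqref{eq:sump}, this yields $\overline{\mathbf{w}}^{(t+1)}=\overline{\mathbf{w}}^{(t+1-T)}+\sum_{i\in\mathcal{S}_t}p_iE_i(\mathbf{v}_i^{(t+1)}-\mathbf{w}_i^{(t+1-T)})$. Invoking \eqref{ekt1T}, every local model at the previous synchronization step coincides with the global model, so $\mathbf{w}_i^{(t+1-T)}=\mathbf{w}^{(t+1-T)}$ for all $i$ and $\overline{\mathbf{w}}^{(t+1-T)}=\mathbf{w}^{(t+1-T)}$, giving $\overline{\mathbf{w}}^{(t+1)}=\mathbf{w}^{(t+1-T)}+\sum_{i\in\mathcal{S}_t}p_iE_i(\mathbf{v}_i^{(t+1)}-\mathbf{w}^{(t+1-T)})$.

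Next I would take the conditional expectation over the scheduling variables given the SGD history that produced the $\mathbf{v}_i^{(t+1)}$. In the reformulated process every client computes $\mathbf{v}_i^{(t+1)}$ regardless of scheduling, and the uniform draws of $J$ are independent of the minibatch samples, so these expectations can be taken with the $\mathbf{v}_i^{(t+1)}$ held fixed. Writing the correction term as $\sum_{i\in[N]}\mathbf{1}\{i\in\mathcal{S}_t\}\,p_iE_i(\mathbf{v}_i^{(t+1)}-\mathbf{w}^{(t+1-T)})$ and using linearity, it suffices to evaluate $\mathbb{P}(i\in\mathcal{S}_t)$. By the scheduling rule of Section~\ref{scheduling}, within each block of $E_i$ consecutive global rounds client $i$ participates in exactly one, selected uniformly at random, and the global round starting at $t+1-T$ lies inside one such block; hence $\mathbb{P}(i\in\mathcal{S}_t)=1/E_i$. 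Substituting, the factors $E_i$ cancel and the correction term has expectation $\sum_i p_i(\mathbf{v}_i^{(t+1)}-\mathbf{w}^{(t+1-T)})=\overline{\mathbf{v}}^{(t+1)}-\mathbf{w}^{(t+1-T)}$, so that $\mathbb{E}[\overline{\mathbf{w}}^{(t+1)}]=\mathbf{w}^{(t+1-T)}+\overline{\mathbf{v}}^{(t+1)}-\mathbf{w}^{(t+1-T)}=\overline{\mathbf{v}}^{(t+1)}$.

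The only genuinely delicate point is the identity $\mathbb{P}(i\in\mathcal{S}_t)=1/E_i$ together with the independence claim that licenses taking the scheduling expectation while holding $\mathbf{v}_i^{(t+1)}$ fixed; this is where the input condition $K/(TE_i)\in\mathbb{Z}^+$ and the footnote normalization (all clients have energy at $t=0$) are used, ensuring the global round starting at $t+1-T$ sits inside a complete, freshly re-sampled block of size $E_i$. Everything else is linear manipulation of the weights $p_i$, so I would dispatch the non-aggregation case in one line and devote the proof to the three displayed reductions above.
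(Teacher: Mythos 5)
Your proposal is correct and follows essentially the same route as the paper's proof: rewrite the aggregation step with an indicator for $i\in\mathcal{S}_t$, use $\mathbb{P}(i\in\mathcal{S}_t)=1/E_i$ to cancel the $E_i$ scaling, and collapse the remaining sums via $\sum_k p_k=1$ and \eqref{ekt1T}. Your added care about conditioning on the SGD history and the trivial non-synchronization case are minor expository refinements of the same argument.
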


\begin{proof}\label{proof-lemma1} 
Define a binary random variable $\alpha_i^{t}$ such that:
\begin{equation}\label{alpha}
\alpha_i^{t} = \left \{ \begin{matrix} 1 & \text{ if client $i$ participates at iteration $t$}\\ 0 & \text{ otherwise}\end{matrix} \right .
\end{equation}
According to the client scheduling algorithm from Section~\ref{scheduling}, at $t\in\mathcal{T}$, client $i$ participates in one of the $E_i$ consecutive global rounds uniformly at random. Hence, among the global rounds starting at $\{t, t+T, \ldots, t+(E_i-1)T\}$, the probability of participating at a specific round is $\frac{1}{E_i}$, from which we have,
\begin{equation}\label{eq:prob}
P[\alpha_i^{t} = 1]  = \frac{1}{E_i}. 
\end{equation}
Then, by defining $\alpha_t \triangleq (\alpha_1^t,\ldots, \alpha_N^t)$, we find that,
\begin{align}
 \mathbb{E}\left[\overline{\bw}^{(t+1)} \right ]  & = \sum_{k\in[N]} p_k  \mathbb{E}\Big [ 
\mathbf{w}^{(t+1-T)}_k   + \sum_{i\in\mathcal{S}_{(t+1)}} E_i p_i (\mathbf{v}^{(t+1)}_i - \mathbf{w}^{(t+1-T)}_i)  \Big ] \\
& = \sum_{k\in[N]} p_k \bigg ( \mathbf{w}^{(t+1-T)}_k  + \mathbb{E}\Big [ 
\sum_{i\in\mathcal{S}_{(t+1)}} E_i p_i (\mathbf{v}^{(t+1)}_i - \mathbf{w}^{(t+1-T)}_i)  \Big ]\bigg ) \\
& = \sum_{k\in[N]} p_k \bigg ( \mathbf{w}^{(t+1-T)}_k  + \mathbb{E}\Big [ 
\sum_{i\in[N]} \alpha_i^t E_i p_i (\mathbf{v}^{(t+1)}_i - \mathbf{w}^{(t+1-T)}_i)  \Big ]\bigg ) \\
& = \sum_{k\in[N]} p_k \bigg ( \mathbf{w}^{(t+1-T)}_k   + \sum_{i\in[N]} \mathbb{E} [ 
 \alpha_i^t E_i p_i (\mathbf{v}^{(t+1)}_i - \mathbf{w}^{(t+1-T)}_i)  ]\bigg ) \\
 & = \sum_{k\in[N]} p_k \bigg ( \mathbf{w}^{(t+1-T)}_k   +
\sum_{i\in[N]} \frac{1}{E_i} E_i p_i (\mathbf{v}^{(t+1)}_i - \mathbf{w}^{(t+1-T)}_i) \bigg ) \label{eq:322a} \\
 & = \mathbf{w}^{(t+1-T)} + \sum_{k\in[N]} p_k \bigg ( 
\Big ( \sum_{i\in[N]} p_i \mathbf{v}^{(t+1)}_i \Big ) - \mathbf{w}^{(t+1-T)} \bigg ) \label{eq:322b} \\
 & = \mathbf{w}^{(t+1-T)} +  \sum_{i\in[N]} p_i \mathbf{v}^{(t+1)}_i - \mathbf{w}^{(t+1-T)}  \label{eq:322c} \\
 & =  \sum_{i\in[N]} p_i \mathbf{v}^{(t+1)}_i  = \overline{\mathbf{v}}^{(t+1)} \label{eq:322d}
\end{align}
where \eqref{eq:322a} is from \eqref{eq:prob}, and \eqref{eq:322b} is from \eqref{ekt1T} and \eqref{eq:sump}. 
\end{proof}
Next, we provide another key lemma. 
\begin{lemma}\label{lemma2} (Bounded variance for $\overline{\bw}^{(t+1)}$)
For all $t\in \mathcal{T}$, by assuming a decreasing learning rate $\eta_t$ with $\eta_t \leq 2\eta_{t+T}$ for all $t\geq0$, we have 
\begin{equation}\label{eq:unbiased2}
\mathbb{E} [\lVert\overline{\mathbf{v}}^{(t+1)}  -  \overline{\bw}^{(t+1)}   \rVert^2 ]  \leq 4 E_{max}^2 G^2 \eta_t^2 T^2,
\end{equation}
where $E_{max} = \max_{i\in[N]} E_i$.  
Hence, the aggregate of the local models have bounded variance. 
\end{lemma}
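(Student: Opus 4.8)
The plan is to first rewrite the discrepancy $\overline{\mathbf{v}}^{(t+1)}-\overline{\mathbf{w}}^{(t+1)}$ in a form that isolates the random scheduling, and then bound the pieces separately. Fix a synchronization step $t$ (so that $t+1\in\mathcal{T}$ and $\tau:=t+1-T\in\mathcal{T}$). Since $\mathbf{w}_k^{(\tau)}=\mathbf{w}^{(\tau)}$ for every $k$ by \eqref{ekt1T}, the update rule \eqref{vvkt2} shows that $\mathbf{w}_k^{(t+1)}$ is in fact the \emph{same} vector for all $k$, namely $\mathbf{w}^{(\tau)}+\sum_{i\in[N]}\alpha_i^{t}p_iE_i(\mathbf{v}_i^{(t+1)}-\mathbf{w}^{(\tau)})$, where $\alpha_i^{t}$ is the participation indicator from \eqref{alpha} (recall $\mathcal{S}_t=\{i:\alpha_i^t=1\}$ is constant over the global round). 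Applying $\sum_k p_k=1$ to both virtual sequences \eqref{eq:virtualw}–\eqref{eq:virtualv} then gives the identity
\[ \overline{\mathbf{v}}^{(t+1)}-\overline{\mathbf{w}}^{(t+1)}=\sum_{i\in[N]} p_i\,(1-\alpha_i^{t}E_i)\,\big(\mathbf{v}_i^{(t+1)}-\mathbf{w}^{(\tau)}\big). \]

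Next I would apply convexity of $\lVert\cdot\rVert^2$ to this $p$-weighted convex combination, obtaining $\lVert\overline{\mathbf{v}}^{(t+1)}-\overline{\mathbf{w}}^{(t+1)}\rVert^2\le\sum_{i\in[N]}p_i(1-\alpha_i^{t}E_i)^2\lVert\mathbf{v}_i^{(t+1)}-\mathbf{w}^{(\tau)}\rVert^2$, and then note that the scalar $(1-\alpha_i^{t}E_i)^2$ is bounded \emph{deterministically} by $E_{max}^2$: it equals $1$ when $\alpha_i^{t}=0$ and $(E_i-1)^2\le E_{max}^2$ when $\alpha_i^{t}=1$. This removes the scheduling randomness, leaving $\lVert\overline{\mathbf{v}}^{(t+1)}-\overline{\mathbf{w}}^{(t+1)}\rVert^2\le E_{max}^2\sum_{i\in[N]}p_i\lVert\mathbf{v}_i^{(t+1)}-\mathbf{w}^{(\tau)}\rVert^2$.

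It then remains to bound $\mathbb{E}\lVert\mathbf{v}_i^{(t+1)}-\mathbf{w}^{(\tau)}\rVert^2$, a standard local-drift estimate. Unrolling the $T$ local SGD steps \eqref{vvkt} between $\tau$ and $t+1=\tau+T$ gives $\mathbf{v}_i^{(t+1)}-\mathbf{w}^{(\tau)}=-\sum_{j=0}^{T-1}\eta_{\tau+j}\nabla F_i(\mathbf{w}_i^{(\tau+j)},\xi_i^{(\tau+j)})$; by Cauchy–Schwarz (power-mean), the bounded-second-moment assumption $\mathbb{E}\lVert\nabla F_i(\mathbf{w}_i^{(\cdot)},\xi_i^{(\cdot)})\rVert^2\le G^2$, and monotonicity of the learning rate, $\mathbb{E}\lVert\mathbf{v}_i^{(t+1)}-\mathbf{w}^{(\tau)}\rVert^2\le T\sum_{j=0}^{T-1}\eta_{\tau+j}^2G^2\le T^2\eta_\tau^2G^2$. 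Finally I would convert $\eta_\tau=\eta_{t+1-T}$ into $\eta_t$ using the hypothesis $\eta_s\le2\eta_{s+T}$ with $s=t+1-T$ together with monotonicity, $\eta_\tau\le2\eta_{t+1}\le2\eta_t$, hence $\eta_\tau^2\le4\eta_t^2$. Combining with the previous paragraph and $\sum_i p_i=1$ yields $\mathbb{E}\lVert\overline{\mathbf{v}}^{(t+1)}-\overline{\mathbf{w}}^{(t+1)}\rVert^2\le 4E_{max}^2G^2\eta_t^2T^2$.

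I do not expect a genuine obstacle: the estimate follows standard distributed-SGD-with-local-averaging arguments. The two places that need care are (i) recognizing the clean algebraic identity for $\overline{\mathbf{v}}^{(t+1)}-\overline{\mathbf{w}}^{(t+1)}$, which hinges on $\mathbf{w}_k^{(t+1)}$ being independent of $k$ immediately after a synchronization, and (ii) the learning-rate bookkeeping, which is precisely what both invokes the assumption $\eta_t\le2\eta_{t+T}$ and produces the constant $4$ in the final bound.
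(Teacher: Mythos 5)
Your proof is correct, but it takes a genuinely different route from the paper's. The paper treats $\overline{\mathbf{w}}^{(t+1)}$ as a random aggregate whose conditional mean (over the scheduling randomness) is $\overline{\mathbf{v}}^{(t+1)}$ by Lemma~\ref{lemma1}, invokes $\mathbb{E}[\lVert X-\mathbb{E}[X]\rVert^2]\leq \mathbb{E}[\lVert X\rVert^2]$, and then bounds the second moment of $\sum_{i\in\mathcal{S}_t}E_ip_i\sum_j\eta_j\nabla F_i(\mathbf{w}_i^{(j)},\xi_i^{(j)})$ by expanding the double sum over client pairs and local steps with Cauchy--Schwarz and AM--GM, arriving at $E_{max}^2T^2\eta_{t+1-T}^2G^2\leq 4E_{max}^2T^2\eta_t^2G^2$. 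You instead work pointwise: the identity $\overline{\mathbf{v}}^{(t+1)}-\overline{\mathbf{w}}^{(t+1)}=\sum_i p_i(1-\alpha_i^tE_i)(\mathbf{v}_i^{(t+1)}-\mathbf{w}^{(t+1-T)})$, Jensen's inequality in the weights $p_i$, the deterministic bound $(1-\alpha_i^tE_i)^2\leq E_{max}^2$, and a standard local-drift estimate. Your argument never uses the distribution of the scheduling variables (only that the coefficients are bounded), so it is more elementary and would survive under arbitrary, even adversarial, scheduling with the same $E_i$-scaling; the price is that it cannot exploit unbiasedness or independence across clients to sharpen the constant, though the paper's own final inequalities also discard that structure, so both arguments land on the identical bound $4E_{max}^2G^2\eta_t^2T^2$. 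All of your intermediate steps check out, including the observation that $\mathbf{w}_k^{(t+1)}$ is independent of $k$ immediately after a synchronization step and the learning-rate conversion $\eta_{t+1-T}\leq 2\eta_t$.
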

\begin{proof}
From \eqref{eq:virtualw} and \eqref{eq:virtualv}, we have that,
\begin{align}
& \mathbb{E} [\lVert\overline{\mathbf{v}}^{(t+1)}  -  \overline{\bw}^{(t+1)}   \rVert^2 ] \notag \\
& \quad =
\mathbb{E} [\lVert \underbrace{\sum_{k\in[N]} p_k\mathbf{v}^{(t+1)}_k}_{A}   -  \underbrace{\sum_{k\in[N]} p_k\mathbf{w}^{(t+1)}_k}_{B}   \rVert^2 ]  \label{eq:AB}
\end{align}
The first term in \eqref{eq:AB} can be written as:
\begin{align}
A & = \sum_{k\in[N]} p_k \Big ( \bw^{(t+1-T)}_k \!- \! \sum_{j=t+1-T}^{t} \eta_j \nabla F_k (\mathbf{w}_k^{(j)}, \xi_k^{(j)})   \Big ) \label{wkt}\\
& = \bw^{(t+1-T)}  - \sum_{k\in[N]}  \sum_{j=t+1-T}^{t}  p_k \eta_j \nabla F_k (\mathbf{w}_k^{(j)}, \xi_k^{(j)})  
\label{eq:222}
\end{align}
where \eqref{wkt} follows from \eqref{vvkt}, and \eqref{eq:222} is 
from \eqref{eq:sump} and \eqref{ekt1T}. 
Similarly, the second term in  \eqref{eq:AB} can be written as:
\begin{align}
B & = \sum_{k\in[N]} p_k \Big ( \mathbf{w}_k^{(t+1-T)} \!+\! \sum_{i\in\mathcal{S}_t} E_i p_i (\mathbf{v}_i^{(t+1)} \!-\! \mathbf{w}_i^{(t+1-T)})  \Big ) \label{eq:B1} \\
& = \mathbf{w}^{(t+1-T)} +  \sum_{i\in\mathcal{S}_t} E_i p_i (\mathbf{w}^{(t+1-T)} - \sum_{j=t+1-T}^{t} \eta_j \nabla F_i(\mathbf{w}_i^{(j)}, \xi_i^{(j)}) - \mathbf{w}^{(t+1-T)}) \label{eqB2}\\
& = \mathbf{w}^{(t+1-T)} -  \sum_{i\in\mathcal{S}_t} E_i p_i \sum_{j=t+1-T}^{t} \eta_j \nabla F_i(\mathbf{w}_i^{(j)}, \xi_i^{(j)}) \label{eqB3}
\end{align}
where \eqref{eq:B1} follows from \eqref{vvkt2}, \eqref{ekt1T}, and \eqref{eq:sump}, whereas  \eqref{eqB2} is from \eqref{vvkt}. By combining \eqref{eqB3}, \eqref{eq:222}, and  \eqref{eq:AB}, we have,
\begin{align}
 \mathbb{E} [\lVert \sum_{k\in[N]} p_k\mathbf{v}^{(t+1)}_k   -  \sum_{k\in[N]} p_k\mathbf{w}^{(t+1)}_k   \rVert^2 ] 
& = \mathbb{E} [\lVert  
- \sum_{k\in[N]}  \sum_{j=t+1-T}^{t} p_k \eta_j \nabla F_k (\mathbf{w}_k^{(j)}, \xi_k^{(j)})  \notag \\
& \quad \quad +  \sum_{i\in\mathcal{S}_t} E_i p_i \sum_{j=t+1-T}^{t} \eta_j \nabla F_i(\mathbf{w}_i^{(j)}, \xi_i^{(j)})
\rVert^2 ] \\
& \leq \mathbb{E} [\lVert  
\sum_{i\in\mathcal{S}_t} E_i p_i \sum_{j=t+1-T}^{t} \eta_j \nabla F_i(\mathbf{w}_i^{(j)}, \xi_i^{(j)})
\rVert^2 ] \label{Ex2} 
\end{align}

where \eqref{Ex2} holds from  $E[(X-E[X])^2]\leq E[X^2]$ and that,
\begin{align}
 \mathbb{E} [ 
\sum_{i\in\mathcal{S}_t} E_i p_i \sum_{j=t+1-T}^{t} \eta_j \nabla F_i(\mathbf{w}_i^{(j)}, \xi_i^{(j)}) ] 
& = \mathbb{E} [\sum_{i\in[N]} \alpha_i^t E_i p_i \sum_{j=t+1-T}^{t} \eta_j \nabla F_i(\mathbf{w}_i^{(j)}, \xi_i^{(j)}) ] \\
& = \sum_{i\in[N]} \frac{1}{E_i} E_i p_i \sum_{j=t+1-T}^{t} \eta_j \nabla F_i(\mathbf{w}_i^{(j)}, \xi_i^{(j)})  \\
& = \sum_{k\in[N]}  \sum_{j=t+1-T}^{t} p_k \eta_j \nabla F_k (\mathbf{w}_k^{(j)}, \xi_k^{(j)}) 
\end{align}
by defining $\alpha_t \triangleq (\alpha_1^t,\ldots, \alpha_N^t)$ as in  \eqref{alpha}. 

Finally, by letting $\mathbf{r}_i^{j} \triangleq \nabla F_i(\mathbf{w}_i^{(j)}, \xi_{i}^{(j)})$, 
\begin{align}
\mathbb{E} [\lVert  
\sum_{i\in\mathcal{S}_t} E_i p_i \sum_{j=t+1-T}^{t} \eta_j \nabla F_i(\mathbf{w}_i^{(j)}, \xi_i^{(j)})
\rVert^2 ] 
&  =  
 \mathbb{E} [ 
\sum_{i\in\mathcal{S}_t} \sum_{{i'}\in\mathcal{S}_t}
E_i E_{i'}  p_i p_{i'}\inner{\sum_{j=t+1-T}^{t} \!\!\! \eta_j  \mathbf{r}_i^{j}}{ \!\!\!\sum_{j'=t+1-T}^{t} \!\!\!\eta_{j'} \mathbf{r}_{i'}^{j'} }
]  \\
&  \leq  \mathbb{E} [ 
\sum_{i\in\mathcal{S}_t} \sum_{{i'}\in\mathcal{S}_t}
E_i E_{i'}  p_i p_{i'} \sum_{j=t+1-T}^{t} \sum_{j'=t+1-T}^{t} \eta_j \eta_{j'} G^2  ]  \label{eq:lasteq} \\
&  \leq  \mathbb{E}[ 
\sum_{i\in [N]} \sum_{{i'}\in [N]} \hspace{-0.2cm}\alpha_i \alpha_{i'}
E_i E_{i'}  p_i p_{i'} \hspace{-0.2cm}\sum_{j=t+1-T}^{t} \sum_{j'=t+1-T}^{t} \hspace{-0.2cm}\eta_j \eta_{j'} G^2  ]  \notag  \\
&  \leq  
\sum_{i\in [N]} \sum_{{i'}\in [N]} \mathbb{E} [ \alpha_i \alpha_{i'}]
E_i E_{i'}  p_i p_{i'} \hspace{-0.2cm}\sum_{j=t+1-T}^{t} \sum_{j'=t+1-T}^{t} \hspace{-0.3cm} \eta_j \eta_{j'} G^2   \notag  \\
&  \leq  E_{max}^2  T^2 \eta_{t+1-T}^2 G^2 \leq 4 E_{max}^2  T^2 \eta_{t}^2 G^2  \label{eq:Emax}
\end{align}
where \eqref{eq:lasteq} follows from \eqref{eq:bounded-norm} and that,
\begin{align}
 \inner{\sum_{j=t+1-T}^{t} \eta_j  \mathbf{r}_i^{j}}{ \sum_{j'=t+1-T}^{t} \eta_{j'} \mathbf{r}_{i'}^{j'} } 
& = \sum_{j=t+1-T}^{t} \sum_{j'=t+1-T}^{t} \eta_{j} \eta_{j'}  \inner{\mathbf{r}_i^{j}}{\mathbf{r}_{i'}^{j'}} \\
& \leq \sum_{j=t+1-T}^{t} \sum_{j'=t+1-T}^{t} \eta_{j} \eta_{j'}   \lVert \mathbf{r}_{i}^{j} \rVert \lVert \mathbf{r}_{i'}^{j'} \rVert  \label{eq:CS} \\
& \leq  \sum_{j=t+1-T}^{t} \sum_{j'=t+1-T}^{t} \eta_{j} \eta_{j'}  
\frac{1}{2} (\lVert \mathbf{r}_{i}^{j} \rVert^2 + \lVert \mathbf{r}_{i'}^{j'} \rVert^2) \label{eq:AMGM}
\end{align}
where \eqref{eq:CS} is from the Cauchy-Schwarz inequality, and \eqref{eq:AMGM} is from the AM-GM (arithmetic mean-geometric mean) inequality. Equation \eqref{eq:Emax} follows from using a decreasing  learning rate $\eta_t$ with $t$ and $\eta_t \leq 2\eta_{t+T}$. In equation \eqref{eq:Emax}, we define $E_{max} \triangleq \max_{i\in [N]} E_i$. 
\end{proof}



We next define the degree of heterogeneity between the clients as in \cite{LiHYWZ20},
\begin{equation}
\Gamma  = F^* - \sum_{i\in[N]} p_i F_i^* \label{gamma}
\end{equation}
where $F^*$ and $F_i^*$ denote the minimum of the global and local loss functions from \eqref{eq:global1} and \eqref{localoss}, respectively.


We are now ready to state our convergence guarantees. 

\begin{theorem}\label{thm1-deter}
For the federated learning problem from \eqref{eq:global1} over $N$ clients and an energy renewal cycle $E_i$ for client $i\in[N]$, Algorithm~\ref{Alg} converges, 
\begin{align}\label{eq:conv}
&\mathbb{E} [F(\mathbf{w}^{(T)})] - F(\mathbf{w}^*) \notag \\
&\hspace{0.5cm} \leq  \frac{2\kappa}{\gamma + K } \left(  \frac{B+C}{\mu} + 2L \lVert \mathbf{w}^{(0)} - \mathbf{w}^*\rVert^2\right )
\end{align}
in $K$ iterations, where $\mathbf{w}^{*}$ denotes the optimal model parameters that minimize the global loss function in \eqref{eq:global1}, and
\begin{equation}\label{eq:C}
C \triangleq 4 E_{max}^2  T^2 \eta_{t}^2 G^2,  
\end{equation} 
where $E_{max} \triangleq \max_{i\in [N]} E_i$, $\kappa = \frac{L}{\mu}$, $\gamma = \max\{8\kappa, T\}$, learning rate $\eta_t = \frac{2}{\mu (\gamma + t)}$, and $B = \sigma^2 6L\Gamma + 8(T-1)^2 G^2$. 
\end{theorem}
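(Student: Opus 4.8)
The plan is to reduce the analysis of Algorithm~\ref{Alg} to the standard convergence proof for distributed SGD with periodic averaging, of the kind in \cite{LiHYWZ20, stich2018local}, and then absorb the extra variance introduced by the intermittent client scheduling into the per-step error budget. The key realization, already set up in Lemmas~\ref{lemma1} and~\ref{lemma2}, is that the virtual aggregate $\overline{\bw}^{(t+1)}$ is an \emph{unbiased} estimate of $\overline{\mathbf{v}}^{(t+1)}$ with variance bounded by $C = 4 E_{max}^2 T^2 \eta_t^2 G^2$ at each synchronization step. So the proof should track the single sequence $\overline{\mathbf{v}}^{(t+1)}$, which evolves exactly like a full-participation local-SGD iterate, and pay the scheduling cost only at the $t \in \mathcal{T}$ steps.

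Concretely, first I would write the one-step recursion $\overline{\mathbf{v}}^{(t+1)} = \overline{\bw}^{(t)} - \eta_t \sum_{k} p_k \nabla F_k(\mathbf{w}_k^{(t)}, \xi_k^{(t)})$ and expand $\mathbb{E}\lVert \overline{\bw}^{(t+1)} - \mathbf{w}^* \rVert^2$. I would split it as $\mathbb{E}\lVert \overline{\mathbf{v}}^{(t+1)} - \mathbf{w}^*\rVert^2 + \mathbb{E}\lVert \overline{\bw}^{(t+1)} - \overline{\mathbf{v}}^{(t+1)}\rVert^2$ using Lemma~\ref{lemma1} to kill the cross term (this is where unbiasedness is essential), and bound the second piece by $C$ via Lemma~\ref{lemma2} — this term is nonzero only when $t+1 \in \mathcal{T}$, but one can just include it at every step for a uniform bound. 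For the first piece I would invoke the now-standard local-SGD contraction: using $\mu$-strong convexity, $L$-smoothness, the variance bound $\sigma^2$, the bounded second moment $G^2$, and the heterogeneity constant $\Gamma$, one obtains $\mathbb{E}\lVert \overline{\mathbf{v}}^{(t+1)} - \mathbf{w}^*\rVert^2 \leq (1 - \mu \eta_t)\mathbb{E}\lVert \overline{\bw}^{(t)} - \mathbf{w}^*\rVert^2 + \eta_t^2 B$ with $B = \sigma^2 + 6L\Gamma + 8(T-1)^2 G^2$, where the $(T-1)^2 G^2$ term comes from bounding the divergence of local models $\sum_k p_k \mathbb{E}\lVert \overline{\bw}^{(t)} - \mathbf{w}_k^{(t)}\rVert^2$ between synchronization steps. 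Combining, $\mathbb{E}\lVert \overline{\bw}^{(t+1)} - \mathbf{w}^*\rVert^2 \leq (1-\mu\eta_t)\mathbb{E}\lVert\overline{\bw}^{(t)} - \mathbf{w}^*\rVert^2 + \eta_t^2(B+C)$.

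From this recursion the endgame is purely mechanical: set $\Delta_t = \mathbb{E}\lVert\overline{\bw}^{(t)} - \mathbf{w}^*\rVert^2$, plug in the diminishing stepsize $\eta_t = \frac{2}{\mu(\gamma+t)}$ with $\gamma = \max\{8\kappa, T\}$, and prove by induction on $t$ that $\Delta_t \leq \frac{v}{\gamma+t}$ for $v = \max\{\gamma \Delta_0,\, \frac{4(B+C)}{\mu^2}\}$; the inductive step is the standard algebraic check that $(1-\mu\eta_t)\frac{v}{\gamma+t} + \eta_t^2(B+C) \leq \frac{v}{\gamma+t+1}$. Finally, apply $L$-smoothness, $\mathbb{E} F(\mathbf{w}^{(T)}) - F(\mathbf{w}^*) \leq \frac{L}{2}\Delta_T$ (noting $\overline{\bw}^{(T)} = \mathbf{w}^{(T)}$ at a synchronization step), and rewrite $v$ in terms of $\kappa = L/\mu$ and $\lVert\mathbf{w}^{(0)} - \mathbf{w}^*\rVert^2$ to land on \eqref{eq:conv}.

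The main obstacle is the first-piece bound, i.e., showing $\mathbb{E}\lVert\overline{\mathbf{v}}^{(t+1)} - \mathbf{w}^*\rVert^2 \leq (1-\mu\eta_t)\Delta_t + \eta_t^2 B$ uniformly, including at non-synchronization steps where $\overline{\bw}^{(t)} \neq \mathbf{w}_k^{(t)}$ in general. This requires the auxiliary lemma bounding $\sum_k p_k \mathbb{E}\lVert\overline{\bw}^{(t)} - \mathbf{w}_k^{(t)}\rVert^2 \leq 4\eta_t^2(T-1)^2 G^2$ — proved by unrolling at most $T-1$ SGD steps back to the last synchronization point, using $\eta_j \leq 2\eta_t$ and the $G^2$ bound — and then combining it with strong convexity and smoothness to control the cross term $\langle \overline{\bw}^{(t)} - \mathbf{w}^*, \sum_k p_k \nabla F_k(\mathbf{w}_k^{(t)})\rangle$, which is where $\Gamma$ enters. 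Everything else — the stepsize induction, the final smoothness step — is routine bookkeeping; the scheduling-specific contribution $C$ rides along essentially for free once Lemmas~\ref{lemma1} and~\ref{lemma2} are in hand.
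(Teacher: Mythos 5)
Your proposal is correct and follows essentially the same route as the paper, which simply delegates to the standard FedAvg analysis of \cite{LiHYWZ20} (Section B.3) with Lemmas~\ref{lemma1} and~\ref{lemma2} substituted for their sampling lemmas; your write-up just makes that substitution explicit, including the bias--variance decomposition at synchronization steps, the local-divergence bound, and the stepsize induction. Note that your form $B = \sigma^2 + 6L\Gamma + 8(T-1)^2G^2$ is the correct one from \cite{LiHYWZ20}; the paper's expression $B = \sigma^2 6L\Gamma + 8(T-1)^2 G^2$ appears to be missing a plus sign.
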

\begin{proof} 
The proof follows directly from Lemmas~\ref{lemma1} and \ref{lemma2} along with standard steps in the convergence analysis of distributed SGD with local averaging \cite{ LiHYWZ20, cho2020client, stich2018local}, e.g., from Section B.3 of \cite{LiHYWZ20} by replacing Lemmas~4 and 5 from  \cite{LiHYWZ20} with Lemmas~\ref{lemma1} and ~\ref{lemma2} from our work, respectively. 
\end{proof}


\section{Experiments}


We now demonstrate the convergence properties of Algorithm~\ref{Alg} compared to  federated learning benchmarks that are agnostic to the energy availability of the clients. 

{\bf Network architecture.} We consider an image classification task with $10$ classes on the CIFAR-10 dataset \cite{krizhevsky2009learning}. 
Training is done using a convolutional neural network with the same architecture from  \cite{mcmahan2017communication} (about $10^6$ parameters). 

\begin{figure}[t]
\centering
\includegraphics[width=0.6\linewidth]{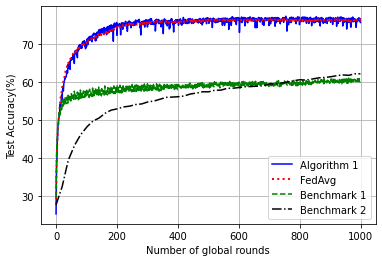}
\caption{Test accuracy of Algorithm~\ref{Alg} compared to federated learning benchmarks  for $N=40$ clients on the CIFAR-10 dataset. }
\label{fig:test-accuracy}
\end{figure}



{\bf Experiment Setup.} We consider a network of $N=40$ clients. The dataset is distributed in an i.i.d. fashion across the network, by shuffling the dataset and distributing it evenly across the clients. Clients use the ADAM optimizer \cite{kingma2014adam} during training, and the number of local training iterations is set to $T=5$.   


{\bf Energy Profile.} In order to show the impact of non-homogeneous energy arrivals, clients are partitioned into $4$ groups $\mathcal{U}_0, \ldots, \mathcal{U}_3$ of equal size, such that  $\mathcal{U}_ k =\{i: i\mod 4 = k\}$. Then, the energy arrivals of clients in group $\mathcal{U}_k$ are assigned as $E_i = \tau_k$ for all $i\in\mathcal{U}$, where $(\tau_0, \tau_1, \tau_2, \tau_3) = (1, 5, 10, 20)$. 
In other words, clients in group $\mathcal{U}_0$ receive energy at every global round, whereas clients in groups  $\mathcal{U}_1$, $\mathcal{U}_2$, and $\mathcal{U}_3$ receive energy at every $5$, $10$, and $20$ global rounds, respectively. 

{\bf Benchmarks.} 
To evaluate the performance of Algorithm~\ref{Alg}, we consider the conventional federated learning algorithm from Section~\ref{federated} (known as \emph{FedAvg} \cite{mcmahan2017communication}), but under the constraint that users receive energy according to the energy arrival process in Section~\ref{sec:energy}, and implement two benchmarks with respect to the specific client scheduling policy. 



\noindent
{\it Benchmark 1:} 
In the first benchmark, each client participates in training as soon as they have enough energy, and then waits until the next energy arrival. More specifically, as soon as $t\mod TE_i = 0$, the client participates in the current global round,  by updating the current state of the global model through $T$ local training iterations as in  \eqref{localupdate}, and then sending the local update to the server. The server then updates the global model according to  \eqref{eq:aggregation}. The client does not participate in training in the next $E_i-1$ global rounds, until the next energy arrival.  

\noindent
{\it Benchmark 2:} 
In the second benchmark, the global model is updated only when all clients have received energy, i.e., the  server waits until all clients have energy available before initiating a global update. 
After all clients have received energy, the server sends the current state of the global model to the clients, the clients compute a local model as in \eqref{localupdate}, and then the server aggregates the local models to update the global model as in \eqref{eq:aggregation}. 
Note that in this case, the server needs to wait for the slowest client, hence the global model is updated once in the duration of $20$ global rounds. 


We evaluate the training performance in terms of the test accuracy with respect to the number of global rounds. 
Our results are given in Figure~\ref{fig:test-accuracy}. In our experiments, we also implement the original federated learning algorithm (\emph{FedAvg}) from Section~\ref{federated} without any resource limitations, which acts as an upper bound on the accuracy.  We observe that Algorithm~\ref{Alg} achieves an accuracy of $77\%$, which is comparable to the accuracy of {\it FedAvg}, whereas the accuracy of the two benchmarks are $60\%$ and $62\%$, respectively, within $1000$ global rounds. 
This is caused by the fact that, in the first benchmark, the training algorithm favors clients with more frequent energy availability, which causes the global model to be biased. In the second benchmark, the server waits until all clients have energy available before each global update, which causes the convergence to be very slow even though the algorithm is unbiased. On the other hand, Algorithm~\ref{Alg} converges fast and significantly outperforms the benchmarks in terms of test accuracy.  


\vspace{-0.2cm}\section{Conclusion}\label{conclusion} 
This paper proposes sustainable federated learning with the  utilization of intermittently powered devices, where a large number of remote devices are expected to perform training on a daily basis. 
We demonstrate a simple and scalable federated learning strategy with provable convergence guarantees, for devices with intermittent energy availability, and show that the proposed framework can significantly improve the training performance compared to the energy-agnostic benchmarks.  
We hope our work to open up further research on  sustainable learning in large-scale federated and decentralized settings.

\bibliographystyle{IEEEtran}
\bibliography{ref}

\end{document}